\documentclass{article} 
\usepackage[final]{colm2026_conference}

\usepackage{microtype}
\usepackage{hyperref}
\usepackage{url}
\usepackage{booktabs}
\usepackage{lineno}
\usepackage{graphicx}
\usepackage{wrapfig}

\definecolor{darkblue}{rgb}{0, 0, 0.5}
\hypersetup{colorlinks=true, citecolor=darkblue, linkcolor=darkblue, urlcolor=darkblue}

\usepackage{enumitem}
\usepackage{xspace}
\usepackage{bbm}
\usepackage{algorithm}
\usepackage{algpseudocode}
\usepackage{tcolorbox}
\tcbuselibrary{breakable,skins}
\newtcolorbox{promptbox}[1][]{
  colback=gray!5,
  colframe=gray!60,
  fontupper=\small\ttfamily,
  breakable,
  enhanced,
  left=4pt, right=4pt, top=4pt, bottom=4pt,
  #1
}

\algrenewcommand\algorithmicindent{1.0em}
\algnewcommand\Input{\item[\textbf{Input:}]}
\algnewcommand\Output{\item[\textbf{Output:}]}

\usepackage{amsmath,amsthm}

\newtheorem{theorem}{Theorem}
\newtheorem{proposition}[theorem]{Proposition}

\newtheorem{remark}{Remark}

\newcommand{\Ours}{\textsc{Slate}}
\newcommand{\SearchR}{\textsc{Search-R1}}
\newcommand{\se}{\mathcal{E}}

\newcommand{\E}{\mathbb{E}}
\newcommand{\Var}{\text{Var}}
\newcommand{\Cov}{\text{Cov}}

\title{Truncated Step-Level Sampling with Process Rewards for\\ Retrieval-Augmented Reasoning}

\author{Chris Samarinas, Haw-Shiuan Chang, and Hamed Zamani \\
Center for Intelligent Information Retrieval \\
University of Massachusetts Amherst \\
Amherst, MA, United States \\
\texttt{\{csamarinas, hschang, zamani\}@cs.umass.edu}}

\begin{document}


\ifcolmsubmission
\linenumbers
\fi

\maketitle

\begin{abstract}
Reinforcement learning has emerged as an effective paradigm for training large language models to interleave reasoning with search engine calls. However, existing approaches face a fundamental credit assignment problem: methods like \SearchR{} assign a single outcome reward to the entire multi-step trajectory, providing no signal about which reasoning or retrieval decisions were responsible for success or failure. Process-reward methods such as StepSearch introduce step-level supervision but still sample complete trajectories independently, so advantage estimates at any given step are contaminated by the randomness of all other steps.
We propose \Ours{} (\textbf{S}tep-\textbf{L}evel \textbf{A}dvantage estimation for \textbf{T}runcated \textbf{E}xploration), which addresses both problems through two complementary ideas. First, \emph{truncated step-level sampling} generates $k$ continuations from a shared prefix, isolating all variation to a single decision point. We prove this reduces the variance of advantage estimates by up to a factor of $T$ compared to full-trajectory sampling for $T$-step trajectories, the first formal variance guarantee for step-level RL in retrieval-augmented reasoning. Second, \emph{dense, decomposed process rewards} separately evaluate reasoning quality, query quality, and answer correctness on a ternary scale via an LLM judge, providing richer supervision than binary outcome signals or heuristic step-level scores.
Experiments on seven QA benchmarks show that \Ours{} consistently outperforms both sparse-reward and process-reward baselines, achieving a 7.0\% relative improvement over \SearchR{} on the 7B model and 30.7\% on the 3B model. Gains are largest on challenging multi-hop tasks, and ablations confirm that truncated sampling and dense rewards provide complementary benefits.
\end{abstract}

\section{Introduction}

Large language models have demonstrated remarkable capabilities in natural language understanding and generation~\citep{hendrycks2020measuring,clark2018think}. Despite these achievements, LLMs often struggle with complex reasoning tasks~\citep{wei2022chain} and lack access to up-to-date external knowledge~\citep{jin2024long}. Integrating search engines into the LLM reasoning loop, where the model interleaves its own chain-of-thought reasoning with external retrieval calls, has emerged as a promising paradigm for knowledge-intensive question answering~\citep{yao2023react,trivedi2022interleaving}. 

Reinforcement learning provides a natural framework for optimizing such systems. \SearchR{}~\citep{jin2025searchr1trainingllmsreason} pioneered RL training for LLMs that invoke search engines during multi-turn reasoning, using outcome-based exact-match rewards. While effective, this sparse reward suffers from \emph{the credit assignment problem}: a single binary signal after a multi-step trajectory cannot attribute success or failure to individual steps. Process-level supervision methods, including StepSearch~\citep{stepsearch2025} with heuristic step rewards and SWiRL~\citep{goldie2025synthetic} with LLM-judge binary step rewards, improve over outcome-only rewards, but both still sample \emph{complete} trajectories independently. As a result, when computing the advantage for step $t$'s action, the reward signal is contaminated by the randomness of all other steps: two trajectories that take the same action at step $t$ but differ at earlier or later steps will receive different total rewards, making it impossible to isolate whether step $t$'s action was genuinely good or bad.

More precisely, the advantage estimates $\hat{A}_i$ used in policy gradient methods like GRPO exhibit high variance because each estimate aggregates reward variation from all $T$ steps of the trajectory. High-variance policy gradients are a well-known obstacle in RL: they slow convergence, require larger batch sizes to stabilize, and can cause training instability or reward collapse~\citep{schulman2015high,schulman2017proximal}. Reducing this variance while preserving the ability to assign credit to individual actions is therefore a central challenge for effective step-level RL in retrieval-augmented reasoning.

In this paper, we propose \Ours{},\footnote{Code available at \url{https://github.com/algoprog/SLATE}.} Step-Level Advantage estimation for Truncated Exploration, which addresses these limitations using two complementary ideas:

\begin{itemize}[leftmargin=*]
    \item \textbf{Truncated Step-Level Sampling:} Instead of sampling $k$ fully independent trajectories, we sample $k$ truncated trajectories that share a common prefix $\tau_{<t}$ and differ only at step $t$. This allows GRPO-style group relative advantages to be computed at the step level, directly attributing rewards to the specific action that caused them. We formally prove this achieves a $T$-fold advantage variance reduction over full-trajectory sampling (Theorem~\ref{thm:var-reduction}), yielding lower-variance policy gradients---to our knowledge, the first formal variance guarantee for step-level RL in retrieval-augmented reasoning.

    \item \textbf{Dense, Decomposed Process Rewards:} Existing step-level rewards either rely on heuristics that require gold intermediate documents~\citep{stepsearch2025} or assign undifferentiated binary judgments that conflate distinct skills~\citep{goldie2025synthetic}. We introduce a decomposed reward that \emph{separately} evaluates reasoning quality, query quality, and answer correctness on a ternary scale $\{-1, 0, +1\}$, enabling the policy gradient to independently reinforce each competency required for effective retrieval-augmented reasoning. An LLM judge operationalizes this multi-criteria evaluation, with a reason-then-score protocol that substantially improves reliability.
\end{itemize}

Experiments across seven QA benchmarks demonstrate that \Ours{} consistently outperforms both sparse-reward methods (\SearchR{}) and process-reward methods (StepSearch), achieving an average EM of 0.461 on the 7B model (7.0\% relative improvement over \SearchR{}) with the largest gains on challenging multi-hop tasks. Gains are even more pronounced for smaller models, with a 30.7\% relative improvement on the 3B model, suggesting that dense step-level supervision is especially valuable when model capacity is limited. Ablations confirm that truncated sampling provides complementary gains above and beyond what dense rewards alone achieve, consistent with our theoretical analysis.

\section{Related Work}

Prior work has established that LLMs benefit substantially from access to external knowledge at inference time, motivating a broad framework of retrieval-enhanced machine learning~\citep{zamani2022retrieval}. Retrieval-augmented generation (RAG)~\citep{gao2023retrieval,lewis2020retrieval} integrates retrieved passages into LLM generation and has become the dominant approach for knowledge-intensive NLP tasks. However, single-turn retrieval struggles with complex questions that require iterative information gathering and multi-step reasoning~\citep{yang2018hotpotqa,trivedi2022interleaving}. Tool-use approaches such as Toolformer~\citep{schick2023toolformer} and Self-RAG~\citep{asai2024self} let models learn when and what to retrieve, but rely on supervised fine-tuning with expensive annotated trajectories.

\paragraph{Retrieval-Augmented Reasoning.}
To address the limitations of single-turn RAG, retrieval-augmented reasoning methods interleave chain-of-thought reasoning~\citep{wei2022chain} with retrieval calls, allowing the model to iteratively gather and synthesize information across multiple turns. Zero-shot approaches such as IRCoT~\citep{trivedi2022interleaving}, ReAct~\citep{yao2023react}, and Search-o1~\citep{li2025search} achieve this through prompting, but lack the ability to improve through training. The most effective optimization paradigm has been reinforcement learning, which has also driven recent progress in LLM reasoning more broadly~\citep{jaech2024openai,guo2025deepseek}, with policy gradient methods such as PPO~\citep{schulman2017proximal} and GRPO~\citep{shao2024deepseekmath} widely adopted. \SearchR{}~\citep{jin2025searchr1trainingllmsreason} pioneered RL training for LLMs that invoke search engines during multi-turn reasoning, using outcome-based exact-match rewards and retrieved token loss masking. Follow-up work includes R1-Searcher~\citep{song2025r1searcherincentivizingsearchcapability}, ReSearch~\citep{chen2025researchlearningreasonsearch}, and ZeroSearch~\citep{sun2025zerosearchincentivizesearchcapability}, all of which rely on sparse global rewards.

\paragraph{Process Rewards for Retrieval-Augmented RL.}
Process reward models have been explored for mathematical reasoning~\citep{lightman2023let,uesato2022solving}, but using step-level rewards as RL training signals has generally underperformed outcome-based rewards in math settings. In retrieval-augmented reasoning, however, step-level rewards are more naturally grounded (see Section~\ref{sec:experiments} for discussion). StepSearch~\citep{stepsearch2025} addresses the reward sparsity problem by introducing step-wise rewards based on information gain and redundancy penalties, but still samples complete trajectories and relies on access to ground-truth intermediate documents. SWiRL~\citep{goldie2025synthetic} uses an LLM judge~\citep{zheng2023judging} to provide step-level binary rewards, but operates \emph{offline} on pre-generated trajectories, so step-level advantages still conflate variation across different prefix histories with variation at the current step, and its undifferentiated binary judgments cannot disentangle distinct skills. Our work differs along three axes: (1)~truncated step-level sampling generates $k$ continuations from an \emph{identical} shared prefix, isolating variation to exactly one decision point with provable variance guarantees (Theorem~\ref{thm:var-reduction}); (2)~\emph{online} GRPO optimization avoids the distribution mismatch of offline approaches; and (3)~our decomposed ternary reward design provides richer supervision than heuristic step rewards~\citep{stepsearch2025} or binary LLM judgments~\citep{goldie2025synthetic}, without requiring ground-truth intermediate annotations.

\paragraph{Turn-Level Credit Assignment and Search Efficiency.}
Two concurrent lines of work also pursue finer-grained credit assignment for multi-turn agents. MT-GRPO~\citep{zeng2025mtgrpo} combines turn-level and outcome advantages within GRPO, but computing group advantages at every state requires tree-structured rollouts whose cost grows as $G^{K-1}$ with the horizon $K$, which the authors themselves describe as computationally prohibitive; their tractable GRPO instance is therefore restricted to two-turn interactions, with longer horizons handled by a critic-based PPO variant instead. Moreover, its turn-level advantages are still computed across full trajectories that diverge at every turn, so for turns $t > 1$ they conflate variation from different prefixes (the confound illustrated in Figure~\ref{fig:overview}), and its verifiable intermediate reward fires only when the gold answer string appears in the retrieved documents, tying query quality to retrieval outcome and to gold-evidence availability. \Ours{}'s truncated sampling obtains prefix-isolated credit assignment at cost \emph{linear} in the horizon ($T$ steps $\times$ $k$ single-step samples), applies at every step rather than only the first, imposes no fixed-turn-count constraint on the group, and comes with a formal variance guarantee (Theorem~\ref{thm:var-reduction}). HiPRAG~\citep{wu2025hiprag} instead targets search \emph{efficiency}: binary per-step over-/under-search detectors are aggregated into a single trajectory-level bonus under standard full-trajectory sampling, so it provides no prefix isolation and addresses an objective complementary to ours; truncated sampling is orthogonal to HiPRAG's reward and could in principle be combined with it.


\begin{figure}[t]
    \centering
    \includegraphics[width=\linewidth]{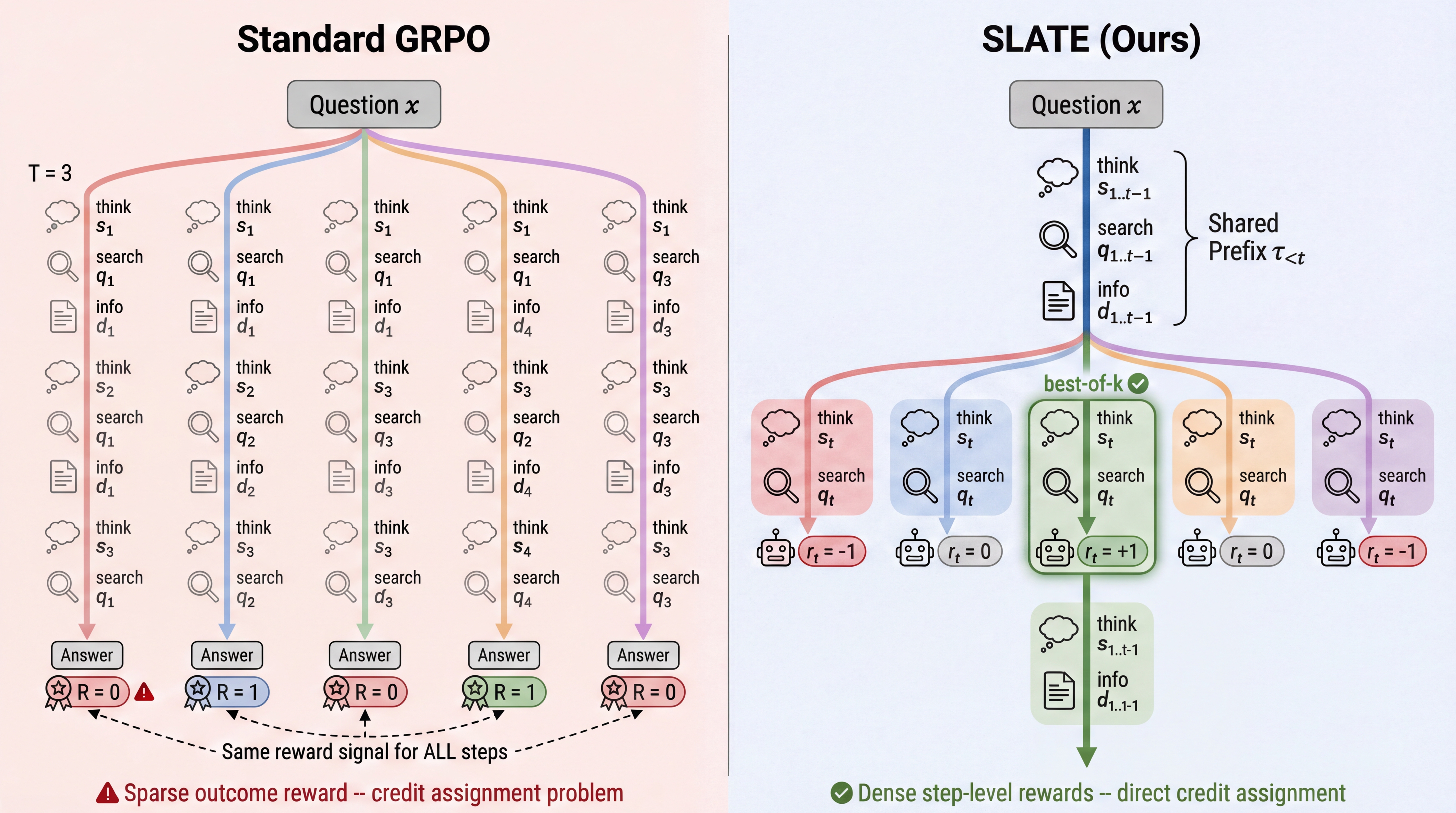}
    \caption{Comparison of GRPO (with full trajectory sampling) and our truncated step-level sampling. Existing process reward methods such as StepSearch and SWiRL follow the same full-trajectory sampling as standard GRPO (left) but inject step-level rewards, so step-level advantages still conflate variation from different prefix histories with variation at the current step. By contrast, our approach (right) fixes the prefix $\tau_{<t}$ and samples $k$ continuations from that shared prefix, isolating all variation to step $t$.}
    \label{fig:overview}
\end{figure}

\section{Methodology}\label{sec:method}

We present \Ours{}, a training framework for retrieval-augmented LLM reasoning that combines truncated step-level sampling with dense, decomposed process rewards. We build on the multi-turn search interaction framework of \SearchR{}~\citep{jin2025searchr1trainingllmsreason} and optimize using a modified GRPO objective. Figure~\ref{fig:overview} provides a high-level overview.

\subsection{Preliminaries: Retrieval-Augmented RL}\label{sec:prelim}

Following \SearchR{}, we model the search engine $\se$ as part of the environment. The LLM policy $\pi_\theta$ generates outputs interleaved with search engine calls, producing trajectories of the form:
\begin{equation}
    \tau = \underbrace{\langle\texttt{think}\rangle s_1 \langle\texttt{/think}\rangle}_{\text{reasoning}}\;
    \underbrace{\langle\texttt{search}\rangle q_1 \langle\texttt{/search}\rangle}_{\text{query}}\;
    \underbrace{\langle\texttt{info}\rangle d_1 \langle\texttt{/info}\rangle}_{\text{retrieval}}\;
    \cdots
\end{equation}
concluding with $\langle\texttt{answer}\rangle a \langle\texttt{/answer}\rangle$, where $s_t$ denotes the reasoning at step $t$, $q_t$ the search query, $d_t = \se(q_t)$ the retrieved documents, and $a$ the final answer. We denote the number of search steps as $T$. The standard RL objective with search is:
\begin{align}\label{eq:rl-search}
    \max_{\pi_\theta}\; & \E_{x \sim \mathcal{D},\, y \sim \pi_\theta(\cdot \mid x; \se)} \left[ r_\phi(x, y) \right] \nonumber - \beta \, \mathbb{D}_{\text{KL}}\!\left[\pi_\theta(y \mid x; \se) \,\|\, \pi_{\text{ref}}(y \mid x; \se)\right],
\end{align}
where $r_\phi(x, y)$ is the reward function and $\pi_{\text{ref}}$ is the reference policy.

\paragraph{Standard GRPO.}
In GRPO~\citep{shao2024deepseekmath}, $G$ complete trajectories $\{y_1, \ldots, y_G\}$ are sampled for each input $x$, and the advantage for trajectory $i$ is:
\begin{equation}\label{eq:grpo-adv}
    \hat{A}_i = \frac{R(y_i) - \mu_R}{\sigma_R + \epsilon},
\end{equation}
where $\mu_R = \frac{1}{G}\sum_{i=1}^{G} R(y_i)$, $\sigma_R = \sqrt{\frac{1}{G}\sum_{i=1}^{G}(R(y_i) - \mu_R)^2}$, and $R(y_i)$ is the trajectory-level reward. As discussed in Section~1, this suffers from poor credit assignment (a single scalar weights gradients for all $T$ steps) and high variance ($\hat{A}_i$ reflects variation across all steps).

\subsection{Truncated Step-Level Sampling}\label{sec:truncated}

Our key algorithmic novelty is \emph{truncated step-level sampling}: instead of sampling $k$ complete independent trajectories that may diverge from the very first step, we sample $k$ truncated trajectories that share a common prefix and differ only at the next reasoning step $t$. This design is motivated by a key theoretical result: we prove in Section~\ref{sec:theory} (Theorem~\ref{thm:var-reduction}) that truncated sampling reduces the variance of advantage estimates by up to a factor of $T$ compared to full-trajectory sampling, directly yielding lower-variance policy gradients and more stable training. Empirically, this translates to faster convergence and a higher reward ceiling (Section~\ref{sec:experiments}, Figure~\ref{fig:training-dynamics}).

\paragraph{Formal Definition.}
Let $\tau_{<t} = (s_1, q_1, d_1, \ldots, s_{t-1}, q_{t-1}, d_{t-1})$ denote the trajectory prefix up to (but not including) step $t$. At each step $t$, we generate $k$ candidate next-step actions by sampling from the policy conditioned on the shared prefix:
\begin{equation}\label{eq:sample-step}
    a_t^{(j)} = (s_t^{(j)}, q_t^{(j)}) \sim \pi_\theta\!\left(\cdot \mid x, \tau_{<t}\right), \quad j = 1, \ldots, k.
\end{equation}
Here, each $a_t^{(j)}$ consists of a reasoning step $s_t^{(j)}$ (the \texttt{<think>} block) followed by a search query $q_t^{(j)}$ (the \texttt{<search>} block), or alternatively a final answer $a^{(j)}$ (the \texttt{<answer>} block) if the model chooses to terminate. Each candidate action $a_t^{(j)}$ is then evaluated by the LLM-as-judge reward model (Section~\ref{sec:dense-reward}) to obtain a step-level reward $r_t^{(j)}$. The step-level group-relative advantage is:
\begin{equation}\label{eq:step-advantage}
    \hat{A}_t^{(j)} = \frac{r_t^{(j)} - \bar{r}_t}{\sigma_t + \epsilon},
\end{equation}
where $\bar{r}_t$ and $\sigma_t$ are the mean and standard deviation of rewards within the step-$t$ group.

\paragraph{Trajectory Construction.}
After computing advantages for all $k$ candidates at step $t$, we select the action to continue the trajectory. The selected action $a_t^{(j^*)}$ is appended to the prefix, the search engine is invoked to retrieve documents $d_t = \se(q_t^{(j^*)})$, and the process repeats at step $t+1$. The selection can follow different strategies:
\begin{itemize}[leftmargin=*]
    \item \textbf{Best-of-$k$}: $j^* = \arg\max_j r_t^{(j)}$ (pure exploitation).
    \item \textbf{Reward-weighted sampling}: $j^* \sim \text{softmax}(\hat{A}_t^{(1)}, \ldots, \hat{A}_t^{(k)} \,/\, \eta)$ with temperature $\eta$ (exploration-exploitation trade-off).
\end{itemize}
In our experiments we adopt \emph{reward-weighted sampling} (with temperature $\eta$), as it balances exploitation of high-reward actions with exploration of diverse reasoning paths, preventing the trajectory from collapsing to a single greedy mode early in training. We emphasize two points that define the procedure precisely. First, truncated sampling is applied at \emph{every} step $t = 1, \ldots, B$ of the trajectory, not at a selected subset: at each step the $k$ candidates are sampled, scored, and used for a gradient update, and exactly one is selected to extend the prefix. Second, each of the $k$ continuations is a \emph{single truncated step} of roughly $L/T$ tokens (where $L$ is the average full-trajectory length), not a complete rollout; the procedure never branches into $k$ full trajectories. The complete procedure is presented in Algorithm~\ref{alg:dense-r1} (Appendix~\ref{app:algorithm}).

\paragraph{Computational Cost.}
Because each candidate is a single step, per-question policy generation totals approximately $k L$ tokens ($k$ samples $\times$ $T$ steps $\times$ $L/T$ tokens each), the same order as standard GRPO's $G L$ at equal group sizes ($k = G$); the shared prefix is generated once and is not replicated across candidates. In fact, Proposition~\ref{cor:sample-eff} (Appendix~\ref{app:sample-efficiency}) shows that \emph{matching} GRPO's advantage variance requires only $G/T$ truncated samples per step, a $T$-fold reduction in total token generation. Truncated sampling is a training-time procedure only: at inference the trained policy generates a single trajectory exactly as in \SearchR{}, so deployment cost is unchanged. The genuine training overhead of \Ours{} is instead the LLM-judge calls (up to $k$ per step); these are short scoring generations that can be batched on separate inference hardware, but they do make training cost depend on the judge model, a dependence we return to in the Limitations (Appendix~\ref{app:limitations}).

\subsection{Dense, Decomposed Process Rewards}\label{sec:dense-reward}

A sparse binary outcome reward after a $T$-step trajectory cannot distinguish whether failure stems from poor reasoning, a bad query, or incorrect answer extraction. Prior step-level approaches partially address this: StepSearch~\citep{stepsearch2025} uses TF-IDF overlap with gold evidence documents, but requires ground-truth intermediate annotations, penalizes well-formed queries to sparse indices, and collapses all quality into a single scalar; SWiRL~\citep{goldie2025synthetic} uses an LLM judge with binary step rewards, but cannot distinguish harmful steps from mediocre ones or disentangle reasoning from query quality.

We introduce a \emph{decomposed} reward that separately evaluates each skill on a \emph{ternary} scale $\{-1, 0, +1\}$, enabling the policy gradient to independently reinforce or penalize reasoning, query formulation, and answer correctness. The design requires only the trajectory context and gold final answer, no intermediate annotations. An LLM judge (Appendix~\ref{app:prompts}) operationalizes the evaluation using a ``reason-then-score'' protocol, which we found substantially improves reward reliability. At each step $t$, the judge evaluates:

\paragraph{Thinking Reward.}
$r_{\text{think}}(s_t, \tau_{<t}) \in \{-1, 0, +1\}$ scores the reasoning step $s_t$ on five criteria (\textit{relevance}, \textit{clarity}, \textit{specificity}, \textit{progress}, \textit{faithfulness}).

\paragraph{Query Reward.}
$r_{\text{query}}(q_t, s_t, \tau_{<t}) \in \{-1, 0, +1\}$ scores the search query $q_t$ on five criteria (\textit{relevance}, \textit{specificity}, \textit{searchability}, \textit{alignment}, \textit{novelty}). Crucially, the query is evaluated \emph{before} observing retrieval results, so the reward reflects intrinsic query quality rather than retrieval nondeterminism.

\paragraph{Final Answer Reward.}
$r_{\text{answer}}(a, a_{\text{gold}}, \tau) \in \{-1, 0, +1\}$ scores whether the predicted answer $a$ conveys the same information as $a_{\text{gold}}$, distinguishing partially correct from fully incorrect answers and handling paraphrases.

\paragraph{Composite Step Reward.}
The total reward for action $a_t^{(j)} = (s_t^{(j)}, q_t^{(j)})$ at step $t$ is:
\begin{equation}\label{eq:step-reward}
    r_t^{(j)} = r_{\text{think}}(s_t^{(j)}, \tau_{<t}) + r_{\text{query}}(q_t^{(j)}, s_t^{(j)}, \tau_{<t}).
\end{equation}
When the model produces an answer at step $t$, the reward additionally includes the answer component and an early-termination bonus $\lambda \cdot (B - t) / B$ that encourages answering as soon as sufficient information is gathered (Appendix~\ref{app:early-termination}):
\begin{equation}\label{eq:final-reward}
    r_t^{(j)} = r_{\text{think}}(s_t^{(j)}, \tau_{<t}) + r_{\text{answer}}(a^{(j)}, a_{\text{gold}}, \tau) + \lambda \cdot \frac{B - t}{B},
\end{equation}
where $B$ is the maximum action budget and $\lambda \geq 0$ controls the bonus strength.

\paragraph{Reward Modeling, Not Distillation.}
Although the judge (Gemma3-27B) is larger than the policies we train, \Ours{} is on-policy RL in the standard RLHF/RLAIF mold~\citep{ouyang2022training,bai2022constitutional,lee2023rlaif}, in which the reward model is routinely as large as or larger than the policy: the judge never produces tokens for the policy to imitate; it only scores the policy's \emph{own} generations on a $\{-1, 0, +1\}$ scale, and gradients flow through the policy's samples via Eqs.~\ref{eq:masked-step-grpo}--\ref{eq:full-objective}. The judge also receives no privileged supervision beyond what outcome-reward baselines already use: the thinking and query rewards never see the gold answer (they assess intrinsic step quality), and the answer reward uses only the gold \emph{final} answer, exactly the supervision behind \SearchR{}'s EM reward. By contrast, StepSearch additionally requires gold \emph{intermediate} documents. Final evaluation is held-out EM against ground truth, which the judge cannot influence, and the largest gains appear on out-of-domain multi-hop benchmarks (Section~\ref{sec:experiments}), the signature of improved search-and-reason skills rather than absorbed judge knowledge.

\paragraph{Robustness to Reward Hacking.}
A learned reward invites the concern that the policy might produce reasoning that merely \emph{sounds} faithful and specific to the judge. Three design choices limit this risk. First, the terminal signal remains grounded: the answer reward is scored against the gold answer and final evaluation is held-out EM, so plausible-but-useless reasoning cannot earn terminal credit. Second, decomposition makes hacking harder rather than easier: thinking and query are scored on separate concrete rubrics (Appendix~\ref{app:prompts}), and the query is judged \emph{before} retrieval, so it cannot be gamed through retrieval nondeterminism. Third, the reason-then-score protocol reduces shallow pattern-matching by the judge. Empirically, reward hacking would manifest as the judge reward rising while held-out EM stagnates or falls; instead, EM improves on all seven benchmarks including five out-of-domain ones (Table~\ref{tab:main-results-7b}), and a policy that had merely learned to flatter the judge would not transfer to unseen multi-hop tasks. We note that reliable step-level judging is aided by our short-horizon setting ($T \leq 4$): the judge evaluates locally checkable properties of a single think-then-query action rather than predicting a distant outcome (Appendix~\ref{app:why-search}).

\subsection{Step-Level GRPO Optimization}\label{sec:step-grpo}

We now describe how the truncated step-level samples and dense rewards are integrated into the GRPO optimization framework.

\paragraph{Step-Level Policy Gradient.}
At each step $t$, given $k$ candidate actions $\{a_t^{(1)}, \ldots, a_t^{(k)}\}$ with step-level advantages $\{\hat{A}_t^{(1)}, \ldots, \hat{A}_t^{(k)}\}$ (Eq.~\ref{eq:step-advantage}), we compute the clipped policy gradient objective for each candidate. Following \SearchR{}, we apply loss masking to retrieved tokens: let $I(y_l) = 1$ if $y_l$ is generated by the LLM and $I(y_l) = 0$ if $y_l$ is a retrieved token. The step-level objective is:

\begin{equation}
\label{eq:masked-step-grpo}
\mathcal{J}_t^{(j)}(\theta) = \frac{1}{\sum_l I(y_l)} \sum_{\substack{l:\,I(y_l)=1}} \min\big( \rho_l \, \hat{A}_t^{(j)},
\text{clip}(\rho_l, 1{-}\epsilon, 1{+}\epsilon) \, \hat{A}_t^{(j)} \big),
\end{equation}
where $\rho_l = \pi_\theta(y_l \mid x, y_{<l}; \se) / \pi_{\theta_{\text{old}}}(y_l \mid x, y_{<l}; \se)$ is the per-token importance ratio and the summation runs only over LLM-generated tokens in action $a_t^{(j)}$.
\noindent
The complete \Ours{} training objective aggregates over all steps and all candidates:

\begin{equation}
\label{eq:full-objective}
\mathcal{J}_{\text{\Ours{}}}(\theta) = \E_{x \sim \mathcal{D}} \bigg[ \sum_{t=1}^{T} \frac{1}{k} \sum_{j=1}^{k} \mathcal{J}_t^{(j)}(\theta) \nonumber - \beta \, \mathbb{D}_{\text{KL}}\!\left[\pi_\theta \| \pi_{\text{ref}}\right] \bigg],
\end{equation}
where $\beta$ is the KL regularization coefficient and the KL divergence is computed only over LLM-generated tokens.

\section{Theoretical Analysis}\label{sec:theory}

The truncated sampling strategy introduced in Section~\ref{sec:truncated} is designed to reduce the variance of advantage estimates by isolating variation to a single decision point. But how much variance reduction does this actually provide, and under what conditions? We now formalize these intuitions, proving that truncated step-level sampling achieves up to a $T$-fold reduction in advantage variance compared to standard full-trajectory GRPO, directly yielding lower-variance policy gradients that enable faster and more stable training.

We formally analyze the variance reduction from truncated step-level sampling. Consider a $T$-step trajectory $\tau = (a_1, \ldots, a_T)$ with step-level rewards $r_t = r(a_t, \tau_{<t})$. To isolate the effect of the sampling strategy, we compare two estimators under the same additive reward $R(\tau) = \sum_t r_t$: \textbf{(A)}~standard GRPO, which samples $G$ complete trajectories and computes trajectory-level advantages $\hat{A}_i = R(\tau_i) - \frac{1}{G}\sum_l R(\tau_l)$; and \textbf{(B)}~our truncated method, which fixes a prefix $\tau_{<t}$ and samples $k$ actions at step $t$ with step-level advantages $\hat{A}_t^{(j)} = r_t^{(j)} - \frac{1}{k}\sum_l r_t^{(l)}$ (see Appendix~\ref{app:gradient-estimators} for the full gradient expressions).

\begin{theorem}[Variance Reduction via Truncated Sampling]\label{thm:var-reduction}
Let $\tau = (a_1, \ldots, a_T)$ be a $T$-step trajectory. Suppose the trajectory-level reward decomposes additively as $R(\tau) = \sum_{t=1}^T r_t(a_t, \tau_{<t})$, where $r_t$ is the step-$t$ reward. Assume the following conditions hold: 1) Non-negative future covariance: for each step $t$ and any fixed prefix $\tau_{<t}$, the covariance between the current step reward $r_t$ and the sum of future rewards $F_t$ satisfies $\Cov(r_t, F_t \mid \tau_{<t}) \geq 0$. 2) Conditional independence: Step rewards are conditionally independent given the prefix trajectory. 3) Variance symmetry: Step rewards have comparable variance across steps, i.e., $\E_{\tau_{<t}}[\Var[r_t \mid \tau_{<t}]] \approx \bar{v}$ for all $t$.

Then the per-sample variance of the scalar advantage in the truncated estimator satisfies:
\begin{equation}\label{eq:var-bound-general}
    \E_{\tau_{<t}}\!\left[\Var[\hat{A}_t^{(j)} \mid \tau_{<t}]\right] \leq \Var\!\left[\hat{A}_i\right],
\end{equation}
where the left side is the expected (over prefixes) per-sample variance in the truncated estimator and the right side is the per-sample variance in the full-trajectory estimator. This holds under Assumption~1 alone. Moreover, under all three assumptions with equal group sizes $k = G$:
\begin{equation}\label{eq:var-bound}
    \E_{\tau_{<t}}\!\left[\Var[\hat{A}_t^{(j)} \mid \tau_{<t}]\right] \leq \frac{1}{T} \cdot \Var\!\left[\hat{A}_i\right].
\end{equation}
\end{theorem}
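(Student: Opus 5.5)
The plan is to compute both sides of the two displayed bounds explicitly in terms of conditional variances of the step rewards, and then compare them using the law of total variance. The two estimators share the same structure, so the comparison is direct.

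\textbf{Step 1: reduce both sides to raw reward variances.} For i.i.d.\ draws $X_1,\dots,X_n$ with variance $\sigma^2$, the centered quantity $X_j - \frac1n\sum_l X_l$ has variance $\frac{n-1}{n}\sigma^2$. Since the $k$ truncated actions are drawn i.i.d.\ from $\pi_\theta(\cdot\mid x,\tau_{<t})$, this gives $\Var[\hat A_t^{(j)}\mid\tau_{<t}] = \frac{k-1}{k}\Var[r_t\mid\tau_{<t}]$. Since the $G$ full trajectories are drawn i.i.d., it also gives $\Var[\hat A_i] = \frac{G-1}{G}\Var[R(\tau)]$. Both expressions carry the same type of finite-sample factor, so the claims reduce to comparing $\E_{\tau_{<t}}[\Var[r_t\mid\tau_{<t}]]$ with $\Var[R]$, up to the ratio $\frac{(k-1)/k}{(G-1)/G}$. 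This ratio equals $1$ when $k=G$. For the first bound I will assume $k\le G$, or equivalently read the statement as comparing per-sample reward variances.

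\textbf{Step 2: first bound, using Assumption 1 only.} Write $R = P_t + r_t + F_t$, where $P_t=\sum_{s<t} r_s$ is determined by $\tau_{<t}$. By the law of total variance,
\[
\Var[R] \;\ge\; \E_{\tau_{<t}}\big[\Var[r_t+F_t\mid\tau_{<t}]\big] \;=\; \E_{\tau_{<t}}\big[\Var[r_t\mid\tau_{<t}] + \Var[F_t\mid\tau_{<t}] + 2\Cov(r_t,F_t\mid\tau_{<t})\big].
\]
Assumption~1 makes the covariance term nonnegative, and the term $\Var[F_t\mid\tau_{<t}]$ is nonnegative as well. Dropping both yields $\Var[R]\ge \E[\Var[r_t\mid\tau_{<t}]]$, which gives Eq.~\ref{eq:var-bound-general}.

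\textbf{Step 3: the $1/T$ bound.} I decompose $\Var[R]$ step by step through the martingale differences $D_s = \E[R\mid\tau_{\le s}]-\E[R\mid\tau_{<s}]$. These are orthogonal, so $\Var[R]=\sum_s \E[\Var[\E[R\mid\tau_{\le s}]\mid\tau_{<s}]]$. Each term satisfies
\[
\E[R\mid\tau_{\le s}] = P_s + r_s + \E[F_s\mid\tau_{\le s}],
\]
and I will use Assumption~2 (conditional independence) to argue that $r_s$ and the future conditional mean do not cancel. Combined with Assumption~1, each summand is then at least $\E[\Var[r_s\mid\tau_{<s}]]$. A cleaner route under Assumption~2 is to treat the $r_s$ as conditionally independent given prefixes. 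In that case all cross-covariances vanish or are nonnegative, and $\Var[R]\ge\sum_s\E[\Var[r_s\mid\tau_{<s}]]$. Assumption~3 then turns this sum into $T\bar v$, while the left side of Eq.~\ref{eq:var-bound} is $\frac{k-1}{k}\bar v$. With $k=G$ the finite-sample factors cancel and the bound follows.

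\textbf{Main obstacle.} The delicate step is showing rigorously that $\Var[R]\ge\sum_s\E[\Var[r_s\mid\tau_{<s}]]$. The reason is that $r_s$ influences later prefixes, so the naive sum of variances is not exact. I would first try the martingale decomposition together with the per-step inequality of Step 2 applied at every $s$. Concretely, I would show $\Var[\E[R\mid\tau_{\le s}]\mid\tau_{<s}]\ge\Var[r_s\mid\tau_{<s}]$ by expanding it as $\Var[r_s]+\Var[\E[F_s\mid\tau_{\le s}]]+2\Cov(r_s,\E[F_s\mid\tau_{\le s}])$, and noting that the covariance equals $\Cov(r_s,F_s\mid\tau_{<s})\ge0$ by the tower property. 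If this works, Assumption~2 is not needed beyond interpretation, and the ``$\approx$'' in Assumption~3 makes the final inequality hold approximately.
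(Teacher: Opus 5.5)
Your proposal is correct. Steps~1--2 reproduce the paper's Part~1: the same $(1-1/k)$ and $(1-1/G)$ factors, and the same law-of-total-variance step followed by expanding $\Var[r_t+F_t\mid\tau_{<t}]$ and dropping nonnegative terms. Your condition $k\le G$ is the precise requirement there, whereas the paper silently sets $k=G$ even for the bound it says holds under Assumption~1 alone.

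The real difference is in Part~2. The paper gets $\Var[R]\ge\sum_t\E[\Var[r_t\mid\tau_{<t}]]$ by asserting that, under Assumption~2, the per-step variances are ``distinct, non-overlapping sources of randomness.'' The inequality it actually displays, $\sum_t\E[\Var[r_t\mid\tau_{<t}]]\le\sum_t\E[\Var[R\mid\tau_{<t}]]$, cannot yield this, because its right side can be as large as $T\,\Var[R]$. Your route fills this gap. The martingale decomposition $\Var[R]=\sum_s\E[\Var[\E[R\mid\tau_{\le s}]\mid\tau_{<s}]]$, together with the tower-property identity $\Cov(r_s,\E[F_s\mid\tau_{\le s}]\mid\tau_{<s})=\Cov(r_s,F_s\mid\tau_{<s})$, derives the inequality rigorously from Assumption~1 applied at each step. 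So Assumption~2 is superfluous once Assumption~1 holds; that is what your route buys over the paper's.

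Commit to the martingale argument and discard the ``cleaner route.'' Assumption~1 constrains only conditional covariances, so it says nothing about the sign of the unconditional $\Cov(r_s,r_{s'})$.

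Your decomposition also isolates the role of Assumption~3. With $k=G$ and no symmetry assumption you already get $\sum_t\E[\Var[\hat A_t^{(j)}\mid\tau_{<t}]]\le\Var[\hat A_i]$. That is the $1/T$ bound for the step-averaged variance. Symmetry is used only to pass from the average to each individual $t$, and under ``$\approx$'' that step is approximate, as you note. The paper's step $T\bar v\ge T\,\E[\Var[r_t\mid\tau_{<t}]]$ carries the same caveat.
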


\noindent\textit{Note:} The two estimators target different quantities, the trajectory-level advantage $\hat{A}_i$ estimates the deviation of the full return $R(\tau)$, while the step-level advantage $\hat{A}_t^{(j)}$ estimates the deviation of the single-step reward $r_t$. The comparison below therefore characterizes a bias-variance trade-off: the step-level estimator achieves lower variance at the cost of discarding future-reward information (see Remark~\ref{rem:bias-variance-estimator} for when this substitution is justified).

\begin{proof}[Proof Sketch]
The proof proceeds in two parts (full details in Appendix~\ref{app:proof-theorem}).

\textbf{Part~1 (General bound, Eq.~\ref{eq:var-bound-general}).}
Fixing the prefix $\tau_{<t}$ eliminates all randomness except the action at step~$t$. By the law of total variance, the expected conditional variance $\E_{\tau_{<t}}[\Var[R(\tau) \mid \tau_{<t}]]$ is at most $\Var[R(\tau)]$. Because past rewards are constant given $\tau_{<t}$, the trajectory reward decomposes as $R(\tau)\mid\tau_{<t}=c+r_t+F_t$, and Assumption~1 ($\Cov(r_t,F_t\mid\tau_{<t})\geq 0$) ensures $\Var[r_t\mid\tau_{<t}]\leq\Var[R(\tau)\mid\tau_{<t}]$. Combining these bounds with $k=G$ yields Eq.~\ref{eq:var-bound-general}.

\textbf{Part~2 ($T$-fold reduction, Eq.~\ref{eq:var-bound}).}
Under conditional independence (Assumption~2), the trajectory variance decomposes as $\Var[R(\tau)] \geq \sum_t \E_{\tau_{<t}}[\Var[r_t\mid\tau_{<t}]]$. Variance symmetry (Assumption~3) then gives $\E_{\tau_{<t}}[\Var[r_t\mid\tau_{<t}]] \leq \frac{1}{T}\Var[R(\tau)]$, which combined with Part~1 yields the $1/T$ factor. As a corollary, we show that truncated sampling also yields a $T$-fold reduction in total token generation cost to achieve the same advantage variance as standard GRPO (Proposition~\ref{cor:sample-eff} in Appendix~\ref{app:sample-efficiency}). 
\end{proof}

Since the policy gradient is a linear function of these advantages, lower advantage variance directly translates to lower-variance gradient estimates, enabling faster convergence and better final solutions (see Appendix~\ref{app:variance-discussion} for a detailed discussion). We further discuss the bias-variance trade-off of estimator substitution and the credit assignment benefits of dense rewards in Appendix~\ref{app:proof-remarks}.

\section{Experiments}\label{sec:experiments}

\paragraph{Datasets}
We evaluate \Ours{} on seven benchmark datasets spanning two categories:
(1)~\textbf{Single-Hop Question Answering}: NQ~\citep{kwiatkowski2019natural}, TriviaQA~\citep{joshi2017triviaqa}, and PopQA~\citep{mallen2022not}.
(2)~\textbf{Multi-Hop Question Answering}: HotpotQA~\citep{yang2018hotpotqa}, 2WikiMultiHopQA~\citep{ho2020constructing}, Musique~\citep{trivedi2022musique}, and Bamboogle~\citep{press2022measuring}.
These datasets encompass a diverse range of search-with-reasoning challenges, enabling comprehensive evaluation across both single-turn and multi-hop retrieval scenarios.

\paragraph{Baselines}
We compare \Ours{} against the following baselines: \textbf{Inference without Retrieval:} Direct generation and Chain-of-Thought (CoT) reasoning~\citep{wei2022chain}. \textbf{Inference with Retrieval:} RAG~\citep{lewis2020retrieval}, IRCoT~\citep{trivedi2022interleaving}, and Search-o1~\citep{li2025search}. \textbf{Fine-Tuning Methods:} Supervised fine-tuning (SFT)~\citep{chung2024scaling} and RL without search (R1)~\citep{guo2025deepseek}. \textbf{Search with RL:} \SearchR{}~\citep{jin2025searchr1trainingllmsreason} (sparse outcome reward), ZeroSearch~\citep{sun2025zerosearchincentivizesearchcapability}, ReSearch~\citep{chen2025researchlearningreasonsearch}, and StepSearch~\citep{stepsearch2025} (step-wise process rewards). We do not compare with deep research systems (e.g., OpenAI Deep Research, Gemini Deep Research), as these target long-form report generation with iterative web browsing and fundamentally different evaluation protocols (e.g., human preference ratings over multi-page reports), making them not directly comparable to factoid QA benchmarks evaluated via exact match. Full experimental setup details (hyperparameters, hardware, retrieval configuration) are provided in Appendix~\ref{app:setup}.

\subsection{Main Results}
The main results comparing \Ours{} with all baselines across seven datasets are presented in Table~\ref{tab:main-results-7b}. We make the following key observations:

\textbf{\Ours{} consistently performs best across all benchmarks.}
On the 7B model, \Ours{} obtains an average EM of 0.461, representing a 3.0\% absolute (7.0\% relative) improvement over \SearchR{} (0.431) and outperforming the best prior results on every individual dataset. On the 3B model, the improvement over \SearchR{} is even more substantial (0.396 vs.\ 0.303, a 30.7\% relative improvement), demonstrating that smaller models benefit more from the dense step-level supervision.

\textbf{Improvements are largest on hard multi-hop benchmarks.}
The gains of \Ours{} over prior methods are non-uniform and scale with task difficulty. On the harder out-of-domain multi-hop datasets, \Ours{} (7B) achieves the largest absolute improvements: on Musique, the gain over \SearchR{} is +5.1\% and over StepSearch +3.1\%; on Bamboogle, +6.2\% and +2.7\%, respectively. On 2WikiMultiHopQA, \Ours{} obtains 0.413 vs.\ StepSearch's 0.385 (+2.8\%) and \SearchR{}'s 0.382 (+3.1\%). This pattern is consistent with our hypothesis that dense step-level rewards help most when complex multi-step reasoning is required, as the credit assignment problem is most severe for longer trajectories. Notably, \Ours{} is the only method that consistently outperforms both \SearchR{} and StepSearch across all four multi-hop benchmarks, as prior methods show complementary strengths.

\textbf{Gains on single-hop QA and out-of-domain generalization.}
On the single-hop QA benchmarks, \Ours{} outperforms \SearchR{} by 1.3--1.9\% absolute EM. Notably, five of our seven evaluation benchmarks are out-of-domain (trained only on NQ+HotpotQA), and the largest gains occur on these unseen multi-hop tasks, demonstrating that step-level supervision teaches transferable reasoning skills rather than dataset-specific shortcuts.

\textbf{Smaller models benefit more from dense supervision.}
On the 3B model, gains over \SearchR{} are dramatically larger on multi-hop benchmarks (e.g., +16.7\% on Musique, +27.3\% on Bamboogle), suggesting smaller models benefit most from step-level supervision.

\begin{table}[t]
    \centering
    \small
    \setlength{\tabcolsep}{4pt}
    \renewcommand{\arraystretch}{1.15}
    \caption{Main results (Exact Match) on Qwen2.5-7B-Base and Qwen2.5-3B-Base across seven QA benchmarks. Best results are \textbf{bolded}, second best are \underline{underlined}. \SearchR{} and \Ours{} are trained on NQ+HotpotQA. StepSearch is trained on MuSiQue (19k) and does not report single-hop QA results. $^\dagger$Evaluated on Wiki-18 knowledge base.}
    \vspace{0.2cm}
    \label{tab:main-results-7b}
    \label{tab:main-results-3b}
    \begin{tabular}{l ccc cccc c}
        \toprule
        \textbf{Method} & \textbf{NQ} & \textbf{TriviaQA} & \textbf{PopQA} & \textbf{HotpotQA} & \textbf{2Wiki} & \textbf{Musique} & \textbf{Bamboogle} & \textbf{Avg.} \\
        \midrule
        \multicolumn{9}{l}{\textit{Qwen2.5-7B-Base}} \\
        \midrule
        CoT & 0.228 & 0.529 & 0.231 & 0.217 & 0.261 & 0.045 & 0.168 & 0.240 \\
        RAG & 0.371 & 0.582 & 0.339 & 0.287 & 0.231 & 0.061 & 0.214 & 0.298 \\
        Search-o1 & 0.321 & 0.548 & 0.298 & 0.193 & 0.181 & 0.053 & 0.302 & 0.271 \\
        ZeroSearch & 0.411 & 0.601 & 0.389 & 0.294 & 0.275 & 0.102 & 0.258 & 0.333 \\
        ReSearch & 0.398 & 0.594 & 0.372 & 0.294 & 0.264 & 0.143 & 0.373 & 0.348 \\
        \SearchR{} & \underline{0.480} & \underline{0.638} & \underline{0.457} & \underline{0.433} & 0.382 & 0.196 & 0.432 & \underline{0.431} \\
        StepSearch$^\dagger$ & -- & -- & -- & 0.380 & \underline{0.385} & \underline{0.216} & \underline{0.467} & --\\
        \textbf{\Ours{} (Ours)} & \textbf{0.497} & \textbf{0.652} & \textbf{0.470} & \textbf{0.451} & \textbf{0.413} & \textbf{0.247} & \textbf{0.494} & \textbf{0.461} \\
        \midrule
        \multicolumn{9}{l}{\textit{Qwen2.5-3B-Base}} \\
        \midrule
        CoT & 0.168 & 0.451 & 0.196 & 0.163 & 0.257 & 0.032 & 0.072 & 0.191 \\
        RAG & 0.312 & 0.504 & 0.296 & 0.251 & 0.221 & 0.051 & 0.076 & 0.244 \\
        Search-o1 & 0.267 & 0.465 & 0.254 & 0.240 & 0.207 & 0.045 & 0.316 & 0.256 \\
        ZeroSearch & 0.348 & 0.531 & 0.365 & 0.260 & 0.234 & 0.056 & 0.096 & 0.270 \\
        ReSearch & 0.339 & 0.518 & 0.342 & 0.261 & 0.228 & 0.074 & 0.184 & 0.278 \\
        \SearchR{} & \underline{0.406} & \underline{0.587} & \underline{0.435} & 0.284 & 0.273 & 0.049 & 0.088 & 0.303 \\
        StepSearch$^\dagger$ & -- & -- & -- & \underline{0.329} & \underline{0.339} & \underline{0.181} & \underline{0.328} & -- \\
        \textbf{\Ours{} (Ours)} & \textbf{0.425} & \textbf{0.603} & \textbf{0.451} & \textbf{0.351} & \textbf{0.368} & \textbf{0.216} & \textbf{0.361} & \textbf{0.396} \\
        \bottomrule
    \end{tabular}
\end{table}

\paragraph{Ablation Study}

Table~\ref{tab:ablation} ablates each component of \Ours{} on Qwen2.5-7B-Base across multi-hop benchmarks, where multi-step trajectories make them the most informative setting for isolating the effects of truncated sampling and step-level rewards.
Variant~(a) applies the \emph{identical} Gemma3-27B decomposed rewards with standard full-trajectory GRPO sampling (mirroring SWiRL~\citep{goldie2025synthetic}); the comparison between variant~(a) and full \Ours{} therefore holds the reward model fixed and changes \emph{only} the sampling strategy, isolating the contribution of truncated sampling from that of the stronger reward signal.
Variant~(a) already improves over \SearchR{} (+2.9\% avg.), but full \Ours{} achieves a further +1.1\% gain, with the largest improvements on the hardest benchmarks (Musique $+$1.1\%, Bamboogle $+$1.3\%), confirming that the gains do not come simply from a better reward model and that the sampling strategy matters beyond the reward signal (consistent with Theorem~\ref{thm:var-reduction}).
Removing LLM-judge rewards (variant~b) causes a larger drop of 2.4\% average EM, with the biggest impact on harder datasets (Musique $-$2.9\%, Bamboogle $-$3.7\%).
Variant~(c), with only EM reward, performs close to \SearchR{} (0.368 vs.\ 0.361), confirming that the synergy of both components yields the full improvement.

\newpage

\begin{wrapfigure}{r}{0.41\textwidth}
    \centering
    \vspace{-0.5cm}
    \includegraphics[width=\linewidth]{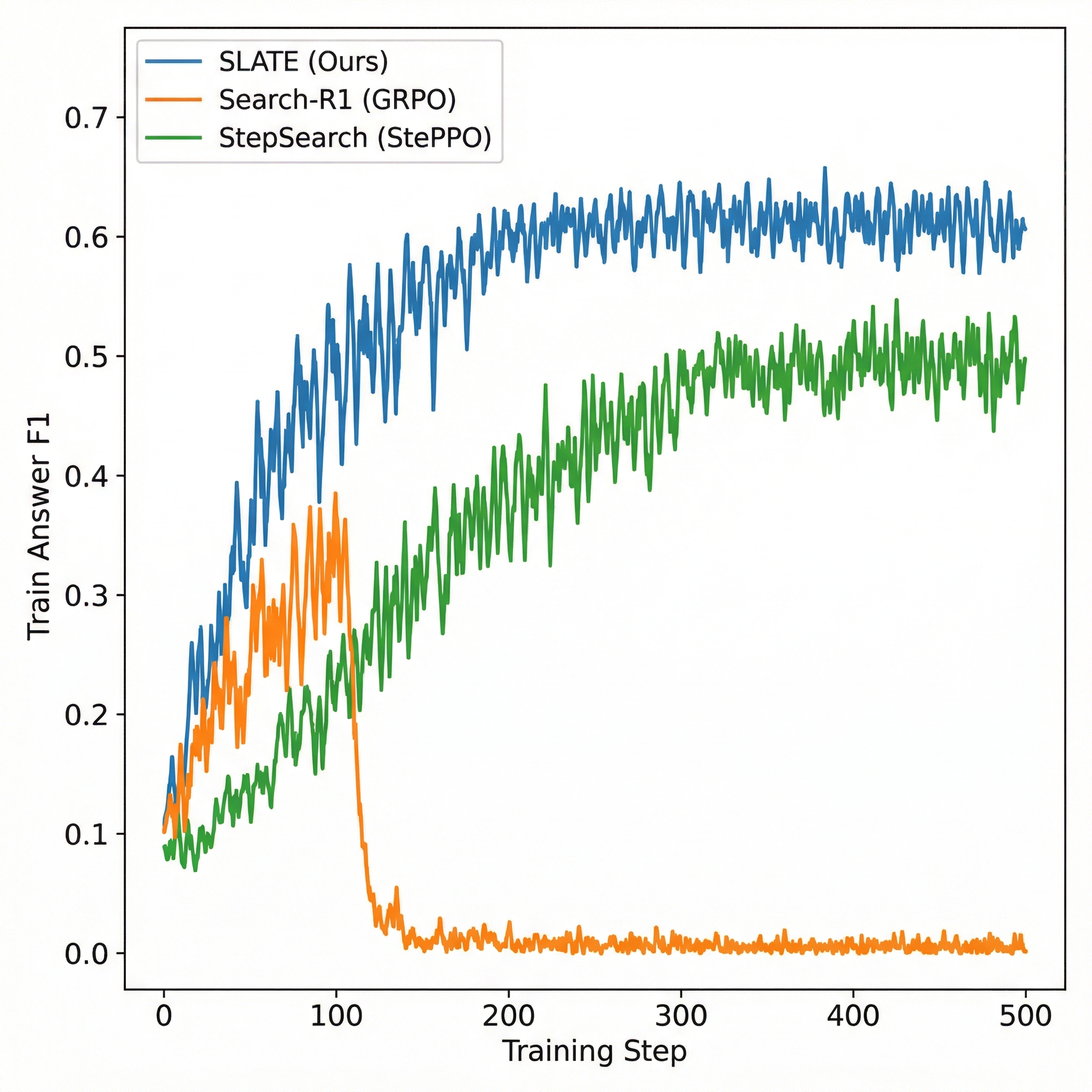}
    \vspace{-10px}
    \caption{Training dynamics on Qwen2.5-7B-Base. Each curve shows the method's \emph{own} training reward, so the curves indicate convergence speed and stability rather than directly comparable reward values.}
    \vspace{-0.3cm}
    \label{fig:training-dynamics}
\end{wrapfigure}

\paragraph{Training Dynamics}

We compare the training reward curves of \Ours{} against \SearchR{} (GRPO) and StepSearch (StePPO) in Figure~\ref{fig:training-dynamics}. Two caveats make this comparison precise. First, each method is plotted against its own training reward (\Ours{}'s dense judge reward, \SearchR{}'s EM reward, StepSearch's StePPO reward), so the curves share a step axis but not a reward axis; the informative comparison is convergence speed and stability, not absolute reward values. Second, the per-question update budget is comparable across methods: with $k = G$, \Ours{} applies gradients to $\sim\!kL$ policy tokens per question, the same order as GRPO's $GL$ (Section~\ref{sec:truncated}), so \Ours{} does not consume more generated data per question but reorganizes the same budget into prefix-isolated groups. With this in mind, \Ours{} exhibits three notable properties:
\textbf{(1) Faster convergence:} \Ours{} reaches its peak training reward approximately 20\% faster than StepSearch, attributable to the denser gradient signal from step-level rewards.
\textbf{(2) Higher reward ceiling:} \Ours{} plateaus at a consistently higher level on its own reward scale than both baselines do on theirs, reflecting the improved credit assignment from truncated sampling.
\textbf{(3) Greater stability:} Unlike GRPO which can exhibit reward collapse, \Ours{} maintains stable optimization throughout training due to the lower-variance advantage estimates predicted by Theorem~\ref{thm:var-reduction}. We also study the effect of the number of truncated samples $k$ per step in Appendix~\ref{app:group-size}. Performance improves steadily from $k=1$ to $k=5$ with diminishing returns at $k=7$, consistent with the $1/k$ variance reduction predicted by Theorem~\ref{thm:var-reduction}.

\begin{table}[t]
    \centering
    \small
    \setlength{\tabcolsep}{3.8pt}
    \renewcommand{\arraystretch}{1.15}
    \caption{Ablation study on Qwen2.5-7B-Base (Exact Match) on multi-hop QA benchmarks.}\label{tab:ablation}
    \vspace{0.2cm}
    \begin{tabular}{l cccc c}
        \toprule
        \textbf{Variant} & \textbf{HotpotQA} & \textbf{2Wiki} & \textbf{Musique} & \textbf{Bamboogle} & \textbf{Avg.} \\
        \midrule
        \SearchR{} (baseline) & 0.433 & 0.382 & 0.196 & 0.432 & 0.361 \\
        \midrule
        (a) GRPO + dense judge rewards (full-traj.) & 0.443 & 0.401 & 0.236 & 0.481 & 0.390 \\
        (b) w/o LLM-judge rewards & 0.440 & 0.393 & 0.218 & 0.457 & 0.377 \\
        (c) Truncated + EM reward only & 0.437 & 0.387 & 0.205 & 0.444 & 0.368 \\
        \midrule
        \textbf{\Ours{} (full)} & \textbf{0.451} & \textbf{0.413} & \textbf{0.247} & \textbf{0.494} & \textbf{0.401} \\
        \bottomrule
    \end{tabular}
\end{table}

\paragraph{Why Process Rewards Succeed in Search.}
In math, step-level rewards as RL training signals generally underperform outcome-based rewards~\citep{uesato2022solving,lightman2023let}. Retrieval-augmented reasoning differs in several key ways: steps are \emph{externally grounded} by retrieval results (eliminating capability mismatch), rewards are \emph{evaluative} rather than predictive, and trajectories are short ($T \leq 4$) and modular. Truncated sampling further addresses prefix confounding by isolating variation to a single action (Theorem~\ref{thm:var-reduction}). We provide a detailed discussion in Appendix~\ref{app:why-search}.

\section{Conclusions}

We presented \Ours{}, a training method for retrieval-augmented reasoning whose key insight is that \emph{how} step-level optimization is performed matters as much as \emph{what} reward signal is used. By sampling $k$ continuations from a shared prefix, our method isolates variation to a single action with provable variance guarantees (up to $T$-fold reduction), a formal contribution absent from prior step-level reward approaches~\citep{goldie2025synthetic,stepsearch2025}. Combined with decomposed ternary rewards, \Ours{} significantly outperforms both sparse-reward and process-reward baselines across seven benchmarks, with ablations confirming that truncated sampling yields gains above and beyond step-level rewards alone.


\bibliography{references}
\bibliographystyle{colm2026_conference}

\newpage

\appendix
\section{Appendix}

\subsection{The \Ours{} Algorithm}\label{app:algorithm}
The proposed \Ours{} model is presented in Algorithm~\ref{alg:dense-r1}.
\begin{algorithm}[]
\caption{Step-Level Sampling with Dense LLM-Judge Rewards}
\label{alg:dense-r1}
\small
\begin{algorithmic}[1]
\Require Policy $\pi_\theta$, search engine $\se$, LLM judge $\mathcal{R}$, dataset $\mathcal{D}$, group size $k$, temperature $\eta$, max steps $B$, termination bonus $\lambda$
\For{each question $x \sim \mathcal{D}$}
    \State $\tau_{<1} \gets \emptyset$, \; $t \gets 1$, \; $\Delta\theta \gets 0$
    \While{$t \leq B$}
        \For{$j = 1, \ldots, k$} \Comment{Step-level sampling}
            \State Sample $a_t^{(j)} = (s_t^{(j)}, q_t^{(j)}) \sim \pi_\theta(\cdot \mid x, \tau_{<t})$
            \State $r_{\text{think}}^{(j)} \gets \mathcal{R}^{\text{think}}(s_t^{(j)}, \tau_{<t})$
            \If{$a_t^{(j)}$ contains \texttt{<search>}}
                \State $r_t^{(j)} \gets r_{\text{think}}^{(j)} + \mathcal{R}^{\text{query}}(q_t^{(j)}, s_t^{(j)}, \tau_{<t})$
            \ElsIf{$a_t^{(j)}$ contains \texttt{<answer>}}
                \State $r_t^{(j)} \gets r_{\text{think}}^{(j)} + \mathcal{R}^{\text{ans}}(a^{(j)}, a_{\text{gold}}, \tau_{<t}) + \lambda \cdot \frac{B - t}{B}$
            \EndIf
        \EndFor
        \Statex \Comment{Compute step-level GRPO advantages}
        \State $\bar{r}_t \gets \frac{1}{k}\sum_j r_t^{(j)}$, \; $\sigma_t \gets \text{std}(\{r_t^{(j)}\})$
        \For{$j = 1, \ldots, k$}
            \State $\hat{A}_t^{(j)} \gets (r_t^{(j)} - \bar{r}_t) / (\sigma_t + \epsilon)$
        \EndFor
        \Statex \Comment{Update policy parameters}
        \State Accumulate gradient: $\Delta\theta \mathrel{+}= \nabla_\theta \Big[\frac{1}{k}\sum_{j=1}^{k} \mathcal{J}_t^{(j)}(\theta) - \beta \, \mathbb{D}_{\text{KL}}[\pi_\theta \| \pi_{\text{ref}}]\Big]$
        \Statex \hspace{2em} $\mathcal{J}_t^{(j)}(\theta) \!=\! \frac{1}{\sum_l I(y_l)} \!\!\sum_{l:\,I(y_l)=1}\!\! \min\!\big(\rho_l \hat{A}_t^{(j)},\; \text{clip}(\rho_l, 1{-}\epsilon, 1{+}\epsilon)\hat{A}_t^{(j)}\big)$
        \Statex \hspace{2em} $\rho_l = \pi_\theta(y_l \mid y_{<l}) / \pi_{\theta_{\text{old}}}(y_l \mid y_{<l})$, \; $I(y_l) = \mathbbm{1}[y_l \text{ is LLM-generated}]$
        \Statex \Comment{Extend trajectory via reward-weighted sampling}
        \State $j^* \sim \text{softmax}\!\big(\hat{A}_t^{(1)}, \ldots, \hat{A}_t^{(k)} \,/\, \eta\big)$
        \State $\tau_{<t+1} \gets \tau_{<t} \cup \{a_t^{(j^*)}, \se(q_t^{(j^*)})\}$
        \State $t \gets t + 1$
        \If{$a_t^{(j^*)}$ contains \texttt{<answer>}}
            \State \textbf{break}
        \EndIf
    \EndWhile
    \State $\theta \gets \theta + \alpha \, \Delta\theta$ \Comment{Batched update after trajectory completes}
\EndFor
\end{algorithmic}
\vspace{0.3em}
\noindent\textit{Note:} Gradients are accumulated across all steps within a trajectory and applied in a single batched update after the trajectory completes. In practice, updates are further batched across multiple examples in the training batch.
\end{algorithm}

\subsection{Experimental Setup}\label{app:setup}

We conduct experiments using Qwen2.5-7B-Base and Qwen2.5-3B-Base~\citep{yang2024qwen2}.
For retrieval, we use the 2018 Wikipedia dump~\citep{karpukhin2020dense} as the knowledge source and E5~\citep{wang2022text} as the retriever, retrieving the top 3 passages per query.
Following \SearchR{}, we merge the training sets of NQ and HotpotQA to form the unified training dataset.
Exact Match (EM) is used as the primary evaluation metric~\citep{yu2024rankrag}.
We use GRPO as the base RL algorithm with a policy learning rate of $1 \times 10^{-6}$, $k = 5$ truncated samples per step, clip ratio $\epsilon = 0.2$, KL coefficient $\beta = 0.001$, and early-termination bonus $\lambda = 0.1$.
For trajectory construction we use reward-weighted sampling (Section~\ref{sec:truncated}) with temperature $\eta = 0.7$ to select which action extends the prefix at each step.
The LLM-as-judge reward model uses Gemma3-27B as the evaluator.
Training is performed for 500 steps on two NVIDIA A100 GPUs using LoRA~\citep{hu2022lora} (rank 16, $\alpha = 64$) for parameter-efficient fine-tuning in bfloat16 precision, with a batch size of 32, maximum sequence length of 4096 tokens, and maximum action budget $B = 4$.
Retrieved token loss masking is applied following \SearchR{}.

\subsection{Gradient Estimators for Full and Truncated Step-level Trajectories}\label{app:gradient-estimators}

The two gradient estimation strategies compared in Section~\ref{sec:theory} are defined as follows. For \textbf{full-trajectory sampling (GRPO)}, the gradient estimate for step $t$ in trajectory $i$ is:
\begin{equation}\label{eq:grpo-grad}
    \hat{g}_t^{\text{GRPO}} = \frac{1}{G}\sum_{i=1}^{G} \hat{A}_i \cdot \nabla_\theta \log \pi_\theta(a_{t,i} \mid \tau_{<t,i}),
\end{equation}
where $\hat{A}_i = R(\tau_i) - \frac{1}{G}\sum_l R(\tau_l)$. For \textbf{truncated step-level sampling (\Ours{})}, the gradient at step $t$ is:
\begin{equation}\label{eq:ours-grad}
    \hat{g}_t^{\text{Ours}} = \frac{1}{k}\sum_{j=1}^{k} \hat{A}_t^{(j)} \cdot \nabla_\theta \log \pi_\theta(a_t^{(j)} \mid \tau_{<t}),
\end{equation}
where $\hat{A}_t^{(j)} = r_t^{(j)} - \frac{1}{k}\sum_l r_t^{(l)}$.

\subsection{A Proof for Theorem~\ref{thm:var-reduction}: Variance Reduction via Truncated Sampling}\label{app:proof-theorem}
\begin{proof}
We decompose the proof into two parts. Part~1 shows that truncated sampling never increases variance in expectation over prefixes (using Assumption~1), and Part~2 quantifies the improvement as a $T$-fold reduction under all three assumptions.

\paragraph{Part 1: General Variance Bound.}

The core intuition is that fixing the prefix $\tau_{<t}$ eliminates all sources of randomness except the action at step $t$, reducing the variance of the advantage estimate on average.

Consider the full-trajectory advantage $\hat{A}_i = R(\tau_i) - \bar{R}$, where $\bar{R} = \frac{1}{G}\sum_i R(\tau_i)$. The variance of $\hat{A}_i$ taken over random trajectories $\tau_i \sim \pi_\theta(\cdot \mid x; \se)$ depends on the total variability of $R(\tau)$:
\begin{align}
    \Var[\hat{A}_i] &= \Var[R(\tau_i) - \bar{R}] \nonumber = \left(1 - \tfrac{1}{G}\right) \Var[R(\tau)].
\end{align}
\noindent
By the law of total variance applied to the trajectory prefix $\tau_{<t}$:
\begin{align}\label{eq:total-var}
    \Var[R(\tau)] &= \underbrace{\E_{\tau_{<t}}\!\left[\Var[R(\tau) \mid \tau_{<t}]\right]}_{\text{within-prefix variance}} \nonumber\quad+ \underbrace{\Var_{\tau_{<t}}\!\left[\E[R(\tau) \mid \tau_{<t}]\right]}_{\text{between-prefix variance} \;\geq\; 0}.
\end{align}
\noindent
Since both terms on the right are non-negative, the expected conditional variance is bounded by the unconditional variance:
\begin{equation}\label{eq:cond-var-bound}
    \E_{\tau_{<t}}\!\left[\Var[R(\tau) \mid \tau_{<t}]\right] \leq \Var[R(\tau)].
\end{equation}
\noindent
Now, in our method, the step-level advantage $\hat{A}_t^{(j)}$ is computed with the prefix $\tau_{<t}$ fixed. Its conditional variance is:
\begin{equation}
    \Var[\hat{A}_t^{(j)} \mid \tau_{<t}] = \left(1 - \frac{1}{k}\right) \Var[r_t \mid \tau_{<t}].
\end{equation}
\noindent
We now show that $\Var[r_t \mid \tau_{<t}] \leq \Var[R(\tau) \mid \tau_{<t}]$ for each prefix. Given a fixed prefix $\tau_{<t}$, the rewards from steps $1, \ldots, t{-}1$ are constants, so $R(\tau) \mid \tau_{<t} = c + r_t + F_t$ where $c = \sum_{t'<t} r_{t'}$ is constant and $F_t = \sum_{t'=t+1}^T r_{t'}$ denotes the future rewards. Therefore:
\begin{align}
    \Var[R(\tau) \mid \tau_{<t}] &= \Var[r_t + F_t \mid \tau_{<t}] \nonumber\\
    &= \Var[r_t \mid \tau_{<t}] + \Var[F_t \mid \tau_{<t}] + 2\Cov(r_t, F_t \mid \tau_{<t}).
\end{align}
Under Assumption~1 ($\Cov(r_t, F_t \mid \tau_{<t}) \geq 0$) and since $\Var[F_t \mid \tau_{<t}] \geq 0$, we obtain:
\begin{equation}
    \Var[r_t \mid \tau_{<t}] \leq \Var[R(\tau) \mid \tau_{<t}].
\end{equation}
Taking expectations over prefixes and applying Eq.~\ref{eq:cond-var-bound}:
\begin{equation}
    \E_{\tau_{<t}}\!\left[\Var[r_t \mid \tau_{<t}]\right] \leq \E_{\tau_{<t}}\!\left[\Var[R(\tau) \mid \tau_{<t}]\right] \leq \Var[R(\tau)].
\end{equation}
Combining these and setting $k = G$:
\begin{align}
    \E_{\tau_{<t}}\!\big[\Var[\hat{A}_t^{(j)} \mid \tau_{<t}]\big]
    &= \left(1 - \tfrac{1}{k}\right) \E_{\tau_{<t}}\!\big[\Var[r_t \mid \tau_{<t}]\big] \nonumber\\
    &\leq \left(1 - \tfrac{1}{G}\right) \Var[R(\tau)] = \Var[\hat{A}_i].
\end{align}
This establishes that, with equal group sizes, the truncated estimator has no more variance in expectation than the full-trajectory estimator. 

\paragraph{Part 2: $T$-fold Reduction Under Independence and Symmetry.}

Part~1 shows the truncated estimator is never worse; we now show it can be $T$ times \emph{better}. The intuition is that the total trajectory variance $\Var[R(\tau)]$ is the sum of variances from all $T$ steps, but the truncated estimator only ``sees'' the variance from one step, hence the $1/T$ factor.

Under Assumption~2 (conditional independence), for each step $t$, the reward $r_t$ depends only on $a_t$ and $\tau_{<t}$, and given the prefix $\tau_{<t}$, the step reward $r_t$ is independent of rewards at other steps conditioned on their respective prefixes. We establish the desired inequality by applying the law of total variance recursively. For any step $t$, the law of total variance gives:
\begin{equation}
    \Var[R(\tau)] = \E_{\tau_{<t}}\!\left[\Var[R(\tau) \mid \tau_{<t}]\right] + \Var_{\tau_{<t}}\!\left[\E[R(\tau) \mid \tau_{<t}]\right] \geq \E_{\tau_{<t}}\!\left[\Var[R(\tau) \mid \tau_{<t}]\right].
\end{equation}
From Part~1 (using Assumption~1), we already have $\Var[r_t \mid \tau_{<t}] \leq \Var[R(\tau) \mid \tau_{<t}]$ for each prefix $\tau_{<t}$. Summing over all steps:
\begin{equation}
    \sum_{t=1}^T \E_{\tau_{<t}}\!\left[\Var[r_t \mid \tau_{<t}]\right] \leq \sum_{t=1}^T \E_{\tau_{<t}}\!\left[\Var[R(\tau) \mid \tau_{<t}]\right].
\end{equation}
Under Assumption~2, the per-step conditional variances account for distinct, non-overlapping sources of randomness (each step's action is the sole source of variation given its prefix). Therefore:
\begin{equation}
    \Var[R(\tau)] \geq \sum_{t=1}^T \E_{\tau_{<t}}\!\left[\Var[r_t \mid \tau_{<t}]\right].
\end{equation}
\noindent
Under Assumption~3 (variance symmetry), $\E_{\tau_{<t}}[\Var[r_t \mid \tau_{<t}]] \approx \bar{v}$ for all $t$, so the right-hand side simplifies to $T \cdot \bar{v}$, giving:
\begin{equation}
    \Var[R(\tau)] \geq T \cdot \bar{v} \geq T \cdot \E_{\tau_{<t}}\!\left[\Var[r_t \mid \tau_{<t}]\right].
\end{equation}
Rearranging yields $\E_{\tau_{<t}}[\Var[r_t \mid \tau_{<t}]] \leq \frac{1}{T} \Var[R(\tau)]$. Combining with the result from Part~1 (with $k = G$):
\begin{align}
    \E_{\tau_{<t}}\!\big[\Var[\hat{A}_t^{(j)} \mid \tau_{<t}]\big]
    &= \left(1 - \tfrac{1}{G}\right)\E_{\tau_{<t}}\!\big[\Var[r_t \mid \tau_{<t}]\big] \nonumber\\
    &\leq \left(1 - \tfrac{1}{G}\right) \frac{1}{T}\Var[R(\tau)] \nonumber\\
    &= \frac{1}{T} \cdot \Var[\hat{A}_i]. 
\end{align}
\end{proof}

\subsection{Theoretical Remarks}\label{app:proof-remarks}

\begin{remark}[Bias-Variance Trade-off in the Estimator Comparison]\label{rem:bias-variance-estimator}
Theorem~\ref{thm:var-reduction} compares the variance of two \emph{different} advantage estimators: the trajectory-level $\hat{A}_i$ (based on $R(\tau)$) and the step-level $\hat{A}_t^{(j)}$ (based on $r_t$). Because the truncated estimator targets the step-level reward rather than the full return, it is not an unbiased estimator of the trajectory-level advantage. The practical validity of this substitution rests on the step-level reward being a low-bias proxy for the contribution of action $a_t$ to the trajectory outcome, a condition favored by our setting's short horizons ($T \leq 4$), externally grounded evaluation (retrieved documents are directly observable), and evaluative (rather than predictive) reward design (Section~\ref{sec:dense-reward}). We discuss this trade-off further in Appendix~\ref{app:bias-variance}.
\end{remark}

\begin{remark}[Credit Assignment]\label{rem:credit}
The \emph{reward design} provides an orthogonal benefit. By the data processing inequality, a binary EM reward provides at most 1 bit about the joint outcome of all $T$ actions, severely diluting the signal for any individual step. In contrast, the step-level LLM-judge reward $r_t$ directly evaluates $a_t$ in context, providing a substantially richer signal. Our ablation (Table~\ref{tab:ablation}) confirms that removing dense rewards causes larger drops than removing truncated sampling. We discuss the bias-variance trade-off of LLM-judge rewards in Appendix~\ref{app:bias-variance}.
\end{remark}

\subsection{Sample Efficiency}\label{app:sample-efficiency}

\begin{proposition}[Sample Efficiency]
\label{cor:sample-eff}
To achieve the same advantage variance as standard GRPO with $G$ full-trajectory samples, the truncated step-level method requires only $G/T$ samples per step under the conditions of Theorem~\ref{thm:var-reduction}, yielding a $T$-fold reduction in total token generation cost.
\end{proposition}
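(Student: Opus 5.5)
We plan to derive Proposition~\ref{cor:sample-eff} directly from the closed-form advantage variances already computed in the proof of Theorem~\ref{thm:var-reduction}. Following the gradient estimators of Appendix~\ref{app:gradient-estimators}, we treat the relevant variance as that of the group-averaged gradient estimate rather than a single advantage. This is the quantity that decreases with group size, roughly as $1/G$ for $\hat{g}_t^{\text{GRPO}}$ and $1/k$ for $\hat{g}_t^{\text{Ours}}$, with the score-function factor $\nabla_\theta\log\pi_\theta$ treated as having comparable second moment in both estimators. Concretely, we write the effective variance of the GRPO estimator at step $t$ as $\frac{1}{G}(1-\frac{1}{G})\Var[R(\tau)]$. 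The corresponding quantity for the truncated estimator, averaged over prefixes, is $\frac{1}{k}(1-\frac{1}{k})\,\E_{\tau_{<t}}[\Var[r_t\mid\tau_{<t}]]$.

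The key steps, in order, are as follows.
\begin{enumerate}
\item Invoke Part~2 of Theorem~\ref{thm:var-reduction}, under all three assumptions, to replace $\E_{\tau_{<t}}[\Var[r_t\mid\tau_{<t}]]$ by at most $\bar v\le\frac{1}{T}\Var[R(\tau)]$.
\item Set the truncated effective variance at most equal to the GRPO one. Dropping the $(1-1/\cdot)$ correction factors, which are both close to $1$ for moderate group sizes, this reduces to $\frac{1}{kT}\le\frac{1}{G}$, that is, $k\ge G/T$.
\item Convert to token cost. GRPO generates $G$ trajectories of length $L$, for about $GL$ tokens. The truncated method, per Section~\ref{sec:truncated}, generates $k$ single-step continuations of length $L/T$ at each of $T$ steps, while the shared prefix is generated once. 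This totals about $kL + L$ tokens, so $k=G/T$ gives roughly $GL/T$, a $T$-fold reduction.
\end{enumerate}

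The main obstacle is step~2. Read literally, the per-sample advantage variance in Theorem~\ref{thm:var-reduction} does not shrink with $k$; it grows slightly, through $1-1/k$. The proposition therefore only makes sense once we pass to the group-averaged gradient and impose an assumption that the score terms are roughly uncorrelated across samples and have comparable magnitude. We would state this explicitly as an approximation. We would also handle the $(1-1/k)$ versus $(1-1/G)$ mismatch either by noting the resulting inequality is conservative in the large-$G$ regime, or by solving $k(k-1)^{-1}\cdots$ exactly and remarking that $k=\lceil G/T\rceil$ suffices up to $O(1/G)$ corrections.

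A secondary caveat concerns the token accounting in step~3. The $+L$ prefix term and the fact that $G/T$ must be an integer (and at least $2$ for a nontrivial group baseline) make the $T$-fold claim asymptotic in $G$. We would state the result with these approximations made explicit.
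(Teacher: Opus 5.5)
Your proposal is correct and follows the paper's own argument. The paper likewise posits per-step variances of $\frac{1}{G}\Var[\hat{A}_i]$ versus $\frac{1}{k}\E_{\tau_{<t}}[\Var[\hat{A}_t^{(j)}\mid\tau_{<t}]]$, applies the $1/T$ bound of Theorem~\ref{thm:var-reduction}, equates to get $k=G/T$, and counts $\frac{G}{T}\cdot\frac{L}{T}\cdot T = GL/T$ tokens; your score-function caveat just makes explicit what the paper leaves implicit when it speaks of the variance of the ``mean advantage estimator'' (whose literal value is zero). Two of your hedges are unnecessary: since $k=G/T\le G$ gives $1-\frac{1}{k}\le 1-\frac{1}{G}$, $k=G/T$ suffices exactly without dropping correction factors, and the extra $+L$ double-counts, because the prefix consists of the selected candidates already among the $k$ samples (retrieved documents are not generated), so the cost is $kL=GL/T$ with no additional prefix term.
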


\subsection{Proof of Proposition~\ref{cor:sample-eff}}\label{app:proof-proposition}
\begin{proof}
The variance of each method's advantage estimator for step $t$ scales inversely with the number of samples. For standard GRPO, the variance of the mean advantage estimator is $\frac{1}{G} \Var[\hat{A}_i]$. For the truncated method, the expected conditional variance of the mean advantage estimator is:
\begin{equation}
    \frac{1}{k} \E_{\tau_{<t}}\!\left[\Var[\hat{A}_t^{(j)} \mid \tau_{<t}]\right] \leq \frac{1}{k} \cdot \frac{1}{T} \cdot \Var[\hat{A}_i],
\end{equation}
where the inequality follows from Theorem~\ref{thm:var-reduction}. Equating the two to find the minimum $k$:
\begin{equation}
    \frac{1}{k} \cdot \frac{1}{T} \cdot \Var[\hat{A}_i] = \frac{1}{G} \cdot \Var[\hat{A}_i] \quad \Longrightarrow \quad k = \frac{G}{T}.
\end{equation}
Thus, only $G/T$ samples per step suffice.

We now count total tokens generated. In standard GRPO, we sample $G$ complete trajectories of average length $L$, costing $G \cdot L$ tokens. In our method, each truncated sample generates only one step's worth of tokens, approximately $L/T$ tokens, since the full trajectory's $L$ tokens are spread over $T$ steps, and we repeat this sampling at each of the $T$ steps. The total cost is therefore:
\begin{equation}
    \underbrace{\frac{G}{T}}_{\text{samples per step}} \times \underbrace{\frac{L}{T}}_{\text{tokens per sample}} \times \underbrace{T}_{\text{steps}} = \frac{G \cdot L}{T}.
\end{equation}
The two sources of savings, $T\times$ fewer samples needed (due to lower per-sample variance) and $T\times$ fewer tokens per sample (due to truncation), together yield a $T^2$ reduction; paying back one factor of $T$ for repeating across all $T$ steps gives a net $T$-fold improvement over the $G \cdot L$ tokens required by standard GRPO. 
\end{proof}

\subsection{Variance Reduction and Convergence}\label{app:variance-discussion}

Theorem~\ref{thm:var-reduction} establishes that the truncated estimator yields lower-variance scalar advantages $\hat{A}_t^{(j)}$. Since the policy gradient $\hat{g}_t = \frac{1}{k}\sum_j \hat{A}_t^{(j)} \nabla_\theta \log \pi_\theta(a_t^{(j)} \mid \tau_{<t})$ is a linear function of these advantages, lower advantage variance directly translates to lower variance in the gradient estimates (modulated by the score function magnitudes).

Intuitively, the policy gradient estimate $\hat{g}$ acts as a noisy compass: it points toward the true gradient on average, but individual estimates may deviate substantially. In standard GRPO, the gradient signal for step~$t$ is weighted by the trajectory-level advantage $\hat{A}_i$, which conflates the quality of all $T$ actions. A trajectory may succeed despite a poor intermediate step, causing that step to be incorrectly reinforced; conversely, a trajectory may fail despite strong intermediate reasoning, penalizing all steps indiscriminately. These ``mislabeled'' updates are the practical manifestation of high advantage variance. Over many updates they cancel in expectation, but each wasted update consumes compute budget and slows progress. High variance also forces the use of smaller learning rates to avoid divergence, further limiting the speed of convergence. In our truncated method, fixing the prefix $\tau_{<t}$ and varying only step $t$ ensures that the advantage $\hat{A}_t^{(j)}$ reflects solely the quality of the current action, so every gradient update sends an accurate signal. Beyond faster convergence, lower variance can also lead to better \emph{final} solutions: cleaner gradients allow the optimizer to reliably descend into sharper, higher-performing regions of the loss landscape that noisy updates would overshoot or bounce out of, and they ensure that more of the finite training budget contributes useful learning signal rather than noise.

\subsection{Bias-Variance Trade-off}\label{app:bias-variance}

\begin{remark}[Bias-Variance Trade-off]
Our step-level LLM-judge rewards may introduce bias if the judge does not perfectly predict the contribution of step $t$ to the final outcome. However, this bias is typically small for well-calibrated LLM judges, and the significant variance reduction (Theorem~\ref{thm:var-reduction}) more than compensates, leading to faster convergence and better final performance. This aligns with the classical bias-variance trade-off in policy gradient methods, where moderate bias with substantially reduced variance yields better optimization dynamics~\citep{schulman2015high}.
\end{remark}

\paragraph{Why the Bias is Small in Our Setting.}
The truncated estimator deliberately targets the single-step reward $r_t$ instead of the full return $R(\tau)$, trading a small bias for an up-to-$T$-fold variance reduction. Three structural properties of retrieval-augmented reasoning keep this bias small (see also Appendix~\ref{app:why-search}): (1)~the rewards are \emph{evaluative} rather than predictive, scoring intrinsic step quality (e.g., a query's relevance, specificity, and searchability) rather than estimating $P(\text{correct answer} \mid \text{step})$, and a query that retrieves relevant documents is genuinely good regardless of downstream variation; (2)~each step is \emph{externally grounded} by the search engine, so a locally good action rarely leads to a dead end the policy cannot recover from; and (3)~horizons are short ($T \leq 4$), so the gap between local step value and trajectory outcome is inherently small.

\paragraph{Mitigations Built into the Method.}
Three components of \Ours{} further reduce the impact of the residual bias: (a)~reward-weighted sampling ($\eta = 0.7$) rather than pure best-of-$k$ preserves exploration and prevents the trajectory from collapsing onto a locally greedy but globally poor action; (b)~the early-termination bonus aligns the local per-step incentive with the global objective of answering efficiently; and (c)~future-reward information is not discarded across the trajectory: the selected action extends the prefix, and later steps receive their own step-level gradients, so the policy is optimized at every step of the trajectory. What \Ours{} does \emph{not} do is propagate retroactive credit from later rewards back to earlier prefix actions within the same trajectory; each action's gradient is computed against its own step-level group when it is sampled. This is a deliberate trade: retroactive credit is exactly the high-variance signal that Theorem~\ref{thm:var-reduction} removes. The approximation can fail when locally promising actions systematically lead to global dead ends, i.e., long horizons with tightly coupled steps, or deceptive or sparse intermediate rewards; we state this in the Limitations (Appendix~\ref{app:limitations}).

\subsection{Limitations}\label{app:limitations}

Our truncated sampling constructs a single trajectory by extending one selected action per step, which limits exploration compared to maintaining multiple full trajectories, and it does not propagate retroactive credit from later rewards back to earlier prefix actions within a trajectory (Appendix~\ref{app:bias-variance}). While the short horizons ($T \leq 4$) and externally grounded rewards in our setting mitigate this, the approach may be less effective in domains with longer horizons, tightly coupled steps, or deceptive intermediate rewards, where locally promising actions systematically lead to global dead ends. Additionally, our method relies on a large LLM judge (Gemma3-27B) for dense supervision: although policy-side generation is no more expensive than standard GRPO (Section~\ref{sec:truncated}), the up-to-$k$ judge calls per step introduce real training-time overhead and make the overall improvement dependent on the quality of the judge model. Finally, our evaluation follows the seven-benchmark, EM-based factoid QA protocol shared by \SearchR{}, StepSearch, ZeroSearch, and ReSearch to keep comparisons apples-to-apples; extending \Ours{} to tasks with richer tool use beyond a single search engine, and to tasks whose answers are not short string-matchable facts, is an important direction for future work.

\subsection{Early-Termination Bonus Details}\label{app:early-termination}

The bonus term $\lambda \cdot (B - t) / B$ in Eq.~\ref{eq:final-reward} encourages the model to produce an answer as soon as it has gathered sufficient information. Without such a term, the model may learn to issue superfluous search queries, each receiving a neutral or mildly positive reward from the LLM judge, even when the information needed to answer has already been retrieved. The bonus is largest when the model answers early (e.g., $\lambda \cdot \frac{3}{4}$ at step $t{=}1$ with $B{=}4$) and zero when it exhausts the full budget ($t{=}B$), creating a progressive incentive to terminate sooner. Crucially, this bonus only applies to answer candidates, so at any step where some of the $k$ sampled actions produce an answer and others produce a search query, the answer candidates receive a higher reward, creating a meaningful advantage signal that survives the group normalization in Eq.~\ref{eq:step-advantage}. This ensures the policy gradient directly reinforces early termination when additional search steps are unlikely to improve the answer.

\subsection{Rollout Volume and Response-Length Bias}\label{app:length-bias}

Compared to \SearchR{}, \Ours{} shifts training-time generation toward many \emph{short} samples: $k$ single-step continuations per step rather than $G$ full trajectories. Two properties of this shift merit explicit analysis. On volume, the short samples sum to a comparable token budget: per-question generation is $\sim\!kL$ tokens, the same order as GRPO's $GL$ at equal group sizes (Section~\ref{sec:truncated}), and Proposition~\ref{cor:sample-eff} shows the truncated scheme is in fact more token-efficient at matched advantage variance. On length bias, nothing in the reward design encourages brevity except the small early-termination bonus ($\lambda = 0.1$), which is deliberately the \emph{only} length-related pressure and applies exclusively to answer candidates (Appendix~\ref{app:early-termination}); the thinking and query rewards score quality, not length. If dense short-rollout supervision induced a harmful bias toward terse, under-searched trajectories, performance would degrade most on multi-hop benchmarks, which require \emph{more} retrieval steps; instead, \Ours{}'s largest gains are exactly there (e.g., +5.1\% on Musique and +6.2\% on Bamboogle over \SearchR{}, Table~\ref{tab:main-results-7b}). This indicates the method produces efficient rather than degenerate trajectories, consistent with the finding of \citet{wu2025hiprag} that dense process rewards reduce over-search without sacrificing accuracy.

\subsection{Why Process Rewards Succeed in Search but Not Math}\label{app:why-search}

While process reward models have proven effective as verifiers for mathematical reasoning~\citep{lightman2023let}, using step-level rewards as RL training signals has generally underperformed outcome-based verifiable rewards for policy optimization in math~\citep{uesato2022solving}. We argue that retrieval-augmented reasoning possesses structural properties that make process rewards fundamentally more reliable as RL training signals, and that our truncated sampling design addresses the remaining risks.

\paragraph{External Grounding Eliminates Capability Mismatch.}
In math, a process reward model (PRM) may assign high reward to an elegant proof strategy that the actor model cannot complete, for example, a sophisticated algebraic manipulation that only a larger model could carry out. This ``capability mismatch'' means the PRM rewards steps that look locally promising but lead to dead ends. In search, the ``hard part'' of each step is outsourced to the search engine: the model only needs to formulate a well-targeted query, and the retrieval system handles the actual information lookup. The capability ceiling for writing a good query is much lower than for completing an advanced proof, so the process reward cannot favor steps that exceed the actor's abilities.

\paragraph{Evaluative vs.\ Predictive Rewards.}
Math PRMs are inherently \emph{predictive}: they estimate $P(\text{correct final answer} \mid \text{this step})$, which requires anticipating all future reasoning. This is precisely where local minima arise, a step that appears promising may lead somewhere the model cannot follow. Our LLM-judge rewards are \emph{evaluative}: they assess intrinsic step quality along concrete dimensions (relevance, specificity, searchability for queries; clarity, progress, faithfulness for reasoning). A query that retrieves relevant documents is genuinely good regardless of downstream trajectory variation. The decomposed ternary design (Section~\ref{sec:dense-reward}) ensures each reward dimension is locally verifiable without needing to predict the trajectory's eventual outcome.

\paragraph{Short Horizons and Natural Decomposability.}
Search trajectories in our setting have $T \leq 4$ steps, each consisting of a discrete think-then-query action with a natural boundary (the search engine call). Math proofs can involve dozens of tightly coupled steps where the value of step 5 depends critically on steps 15--20. With shorter, more modular trajectories, the gap between local step quality and trajectory-level outcome is inherently smaller, reducing the risk that process rewards mislead the policy.

\paragraph{Truncated Sampling Prevents Local Minima.}
Even granting that search rewards are more locally informative, standard full-trajectory sampling still suffers from prefix confounding: a locally good step on a bad trajectory gets penalized, and a mediocre step on a lucky trajectory gets rewarded. This is exactly how local minima arise in PRM-based RL, the policy gets ``trapped'' by rewarding steps that merely co-occur with good outcomes. Our truncated sampling eliminates this confound: all $k$ candidates share the same prefix $\tau_{<t}$, so advantages reflect \emph{only} the current action's quality. The reward-weighted sampling strategy (temperature $\eta = 0.7$) further prevents greedy collapse into local optima during trajectory construction. The ablation in Table~\ref{tab:ablation} provides direct evidence: variant~(a), which uses LLM-judge rewards \emph{without} truncated sampling, underperforms full \Ours{} most on the hardest benchmarks, precisely where prefix-level confounds are most harmful.

\subsection{Hyperparameter Sensitivity}\label{app:group-size}

\Ours{} introduces three hyperparameters beyond standard GRPO: the group size $k$, the selection temperature $\eta$, and the early-termination bonus $\lambda$.

\paragraph{Group Size $k$.}
The group size is the hyperparameter that most directly affects both the variance of advantage estimates and the compute budget, and hence the one most likely to affect training stability. We study $k \in \{1, 3, 5, 7\}$ per step on Qwen2.5-7B-Base (Table~\ref{tab:group-size}). When $k=1$, the method reduces to standard REINFORCE with LLM-judge rewards (no group-relative advantage). Performance improves steadily from $k=1$ to $k=5$, with diminishing returns at $k=7$. This is consistent with our theoretical analysis: increasing $k$ reduces the variance of the step-level advantage estimate (Eq.~\ref{eq:step-advantage}), but the marginal benefit decreases as $1/k$. Reassuringly for stability, the trend is smooth and monotone-then-flat rather than brittle.

\paragraph{Selection Temperature $\eta$ and Termination Bonus $\lambda$.}
The remaining two hyperparameters play more circumscribed roles. The temperature $\eta$ affects only which candidate extends the prefix (Section~\ref{sec:truncated}), not the gradient computation itself: it interpolates between greedy best-of-$k$ ($\eta \to 0$), which risks collapsing exploration onto locally optimal actions, and uniform selection ($\eta \to \infty$), which ignores the reward signal; we use an intermediate $\eta = 0.7$ throughout. The bonus $\lambda = 0.1$ is deliberately small relative to the ternary reward range and is the only length-related pressure in the reward; its role and design are analyzed in Appendix~\ref{app:early-termination}, and its effect on response length in Appendix~\ref{app:length-bias}.

\begin{table}[h]
    \centering
    \small
    \setlength{\tabcolsep}{4pt}
    \renewcommand{\arraystretch}{1.15}
    \caption{Effect of group size $k$ (truncated samples per step) on Qwen2.5-7B-Base (EM).}
    \vspace{0.2cm}
    \label{tab:group-size}
    \begin{tabular}{l cccc c}
        \toprule
        \textbf{Group Size $k$} & \textbf{HotpotQA} & \textbf{2Wiki} & \textbf{Musique} & \textbf{Bamboogle} & \textbf{Avg.} \\
        \midrule
        $k = 1$ & 0.437 & 0.390 & 0.213 & 0.452 & 0.373 \\
        $k = 3$ & 0.446 & 0.405 & 0.237 & 0.483 & 0.393 \\
        $k = 5$ (default) & \textbf{0.451} & \textbf{0.413} & \textbf{0.247} & \textbf{0.494} & \textbf{0.401} \\
        $k = 7$ & 0.449 & 0.411 & 0.244 & 0.491 & 0.399 \\
        \bottomrule
    \end{tabular}
\end{table}

\subsection{LLM-as-Judge Reward Prompts}\label{app:prompts}

We provide the exact prompts used for the three LLM-as-judge reward components described in Section~\ref{sec:dense-reward}. In each prompt, the placeholders in curly braces are filled with the corresponding trajectory content at evaluation time. The judge is instructed to produce a chain-of-thought explanation before the score, enclosed in XML-style tags.

\paragraph{Thinking Reward Prompt.}
\begin{promptbox}
Evaluate the quality of the following reasoning step in a search-based question answering system.\newline
\newline
Context: \{context\}\newline
\newline
Current Thinking Step: \{thinking\}\newline
\newline
The reasoning should be based on the previous context and the question, nothing else.\newline
\newline
Evaluate this thinking step on these criteria:\newline
1. Relevance: Does it address the question appropriately?\newline
2. Clarity: Is the reasoning clear and logical?\newline
3. Specificity: Does it identify concrete information needs?\newline
4. Progress: Does it move toward answering the question?\newline
5. Faithfulness: Does it accurately reflect the information in the previous context? Is there any out-of-context information?\newline
\newline
Provide a score using EXACTLY one of these three values:\newline
- +1: GOOD -- Clear, relevant reasoning that identifies specific information needs and moves toward answering the question\newline
- ~0: ACCEPTABLE -- Reasoning is somewhat relevant but vague, lacks specificity, or makes only minimal progress\newline
- -1: BAD -- Irrelevant, misleading, or counterproductive reasoning that does not help answer the question\newline
\newline
First provide your reasoning, then the score. Use this exact format:\newline
<explanation> Your reasoning here </explanation>\newline
<score> numerical score </score>
\end{promptbox}

\paragraph{Query Generation Reward Prompt.}
\begin{promptbox}
Evaluate the quality of the following search query for a question answering system.\newline
\newline
Context: \{context\}\newline
\newline
Thinking before this query: \{thinking\}\newline
\newline
Generated Query: \{query\}\newline
\newline
IMPORTANT: This is a multi-step reasoning system. The query does NOT need to directly answer the final question in one step. Instead, evaluate whether it makes good progress toward the answer by retrieving useful intermediate information.\newline
\newline
Evaluate this query on these criteria:\newline
1. Relevance: Will it retrieve information that makes progress toward answering the question? (Intermediate steps are valuable!)\newline
2. Specificity: Is it specific enough to get useful results?\newline
3. Searchability: Is it well-formed for a search engine with appropriate keywords? Good queries combine multiple relevant terms.\newline
4. Alignment: Does it align with the thinking step that preceded it?\newline
5. Novelty: Does it explore new information (not redundant with the context)? If the context already contains the answer to what the query is searching for, the query is redundant and unhelpful.\newline
\newline
Provide a score using EXACTLY one of these three values:\newline
- +1: GOOD -- Specific, well-formed query that will retrieve useful information to make progress (even if intermediate). Has clear keywords and good searchability. Combines multiple relevant terms or uses specific names/concepts.\newline
- ~0: ACCEPTABLE -- Query has some specificity but could be improved. May lack context-specific keywords or be somewhat generic, but shows reasonable attempt at targeting the information need.\newline
- -1: BAD -- Single generic word without context (e.g., just ``singer'', ``perfume'', ``city''), completely irrelevant to the question, redundant with information already in the context, or so poorly formed it will return millions of unhelpful results.\newline
\newline
First provide your reasoning, then the score. Use this exact format:\newline
<explanation> Your reasoning here </explanation>\newline
<score> numerical score </score>
\end{promptbox}

\paragraph{Final Answer Reward Prompt.}
\begin{promptbox}
Evaluate if the predicted answer correctly answers the question.\newline
\newline
Context: \{context\}\newline
\newline
Ground Truth Answer: \{ground\_truth\}\newline
\newline
Predicted Answer: \{predicted\_answer\}\newline
\newline
Compare the predicted answer to the ground truth. They don't need to be word-for-word identical, but the predicted answer should convey the same core information.\newline
\newline
Provide a score using EXACTLY one of these three values:\newline
- +1: CORRECT -- The predicted answer conveys the same core information as the ground truth\newline
- ~0: PARTIALLY CORRECT -- The answer is incomplete, ambiguous, or contains minor inaccuracies\newline
- -1: INCORRECT -- The answer is wrong or contradicts the ground truth\newline
\newline
First provide your reasoning, then the score. Use this exact format:\newline
<explanation> Your reasoning here </explanation>\newline
<score> numerical score </score>
\end{promptbox}

\end{document}